\newtheorem{thm}{Theorem}
\newtheorem{prop}{Proposition}
\newcommand{\eg}{e.\,g.,\ }
\newcommand{\ie}{i.\,e.,\ }
\newcommand{\Nat}{\mathord{\mathbb{N}}}
\newcommand{\tu}[1]{\stackrel{#1}{\to}}
\newcommand{\pre}[1]{{^\bullet{#1}}}
\newcommand{\post}[1]{{#1^\bullet}}
\newcommand{\reachable}[2]{\mathcal{R}(#1,#2)}
\newcommand{\Loop}{\mathit{Loop}}
\title{Discovering Hierarchical Process Models: an Approach Based on Events Clustering
\thanks{This work is supported by the Basic Research Program at HSE University, Russia.}}
\author{Antonina K. Begicheva, Irina A. Lomazova, Roman A. Nesterov}
\date{\normalsize HSE University, Faculty of Computer Science\\%
\normalsize12 Pokrovksy Boulevard, Moscow, 109028, Russia}
\begin{document}
\maketitle

\begin{abstract}
Process mining is a field of computer science that deals with  discovery and analysis of process models based on automatically generated event logs. 
Currently, many companies use this technology for  optimization and improving their processes. 
However, a discovered process model may be too detailed, sophisticated and difficult for experts to understand. 
In this paper, we consider the problem of discovering a hierarchical business process model from a low-level event log, i.e., the problem of automatic synthesis of more readable and understandable process models based on  information stored in  event logs of information systems. 

Discovery of better structured and more readable process models is intensively studied in the frame of process mining research from different perspectives. 
In this paper, we present an algorithm for discovering hierarchical process models represented as two-level workflow nets. The algorithm is based on  predefined event clustering  so that the cluster defines a sub-process corresponding to a high-level transition at the top level of the net. 
Unlike existing solutions, our algorithm does not impose  restrictions on the process control flow and allows for concurrency and iteration.

\emph{Keywords:}
process mining; Petri nets; workflow nets; process discovery; hierarchical process model; event log
\end{abstract}



\section{Introduction}\label{sec1}

Over the past decade, companies whose processes are supported by various information systems have become convinced of the need to store as much potentially useful information about the execution of processes within the system as possible. This was facilitated by the qualitative development of areas related to the extraction of valuable information from recorded data, which helps to correct the work of organizations in time and, thus, save and increase their resources. Process mining is a field of science that provides a palette of tools for extracting the logic of system behavior, as well as modeling and optimizing the processes occurring in it. In particular, process mining methods allow you to find inconsistencies between the planned and actual behavior of the system, and also timely track the appearance of inefficient or incorrect behavior.

Despite the fact that more and more attention is being paid to preserving the optimal amount of necessary information about the execution of processes, process execution data is not always available in a convenient format and with the necessary degree of detail, since system logs are generated automatically. Process discovery is aimed at extracting processes from event logs and their representation in the form of a model.  Most of the available process discovery methods provide a model with the same level of detail as a given event log \cite{augusto2018automated}.

Because of this, a promising area for research is the task of discovering a more readable process model from a detailed event log, while preserving important information about the process execution for experts. Readability can be provided in various ways. The most commonly used methods are filtering rare behavior from the original event log, skipping "minor" events (the significance of an event is assessed according to the chosen methodology), and abstraction, where some events are considered indistinguishable from each other. In our study, we consider the latter approach, when more readable models are the result of model abstraction --- they are more compact and have an optimal level of detail for the work of experts than what could be obtained by  direct discovery methods.
In order not to lose important information, we are dealing not only with abstract (high-level) models, but also with hierarchical models that store low-level information in the form of sub-processes.

Thus, in this paper, we propose an algorithm for discovering hierarchical process models from event logs. Processes are modeled with workflow nets \cite{wfnets00}, a special subclass of Petri nets for modeling a control flow of business processes. This study continues our previous study 
\cite{begicheva2017discovering}, in which we have proposed an approach to discover abstract models for processes without cycles.
Here we provide a more general solution by overcoming the prohibition of  cyclic behavior. 

Hierarchical models allows us to have a high-level view of the model by ``folding''  the behavior of an individual  sub-process  into a high-level transition  with the ability to unfold it back. 
So, at the top level there is a high-level model, in which each individual transition corresponds to a sub-process built from low-level events.
The history of the detailed behavior of the process is recorded in a low-level log. 
Regarding the number of levels in the hierarchy, we will only use two levels --- high and low, but the algorithm can naturally be extended to any number of levels.

The paper is structured as follows. Section 2 presents the review of related research. 
Section 3 gives theoretical preliminaries and definitions used in the text. 
In Section 4, we discuss the basics of the hierarchical process discovery algorithm. 
Section 5 presents the main discovery algorithm and the proof of its correctness. 
Section 6 reports the outcomes from the experimental evaluation. 
In Section 7, we conclude the paper and discuss the possible future work directions. 


\section{Related Work}\label{sec2}

Research connected with our paper can be classified into approaches to abstracting event logs and process models and approaches to discovering hierarchical process models from event logs.

The recent work \cite{van2021event} gives a comprehensive review of approaches and methods that can be applied for low-level events abstraction.
The authors divide the methods according to: the learning strategy (supervised or unsupervised), the structure of the process models (strictly sequential or with interleaving), the low-level events grouping approach (deterministic or probabilistic), the nature of the processed data (discrete or continuous data). 
For instance, the method presented in \cite{maneschijn2022methodology} is a supervised method that converts a low-level event log into its high-level representation using detected behavioral patterns, such as sequence, selection, parallelism, interleaving and repetition.

A supervised approach to event abstraction was also presented in \cite{Mannhardt2016}.
This method takes a low-level event log and transforms it to an event log at the desired level of abstraction, using behavioral patterns: sequence, choice, parallel, interleaving and repetition of events.
This technique allows us to obtain a reliable mapping from low-level events to activity patterns automatically and construct a high-level event log using these patterns.
Another supervised event abstraction method was discussed in \cite{tax2016event}. 
The nature of this method is as follows.
We annotate a low-level event with the correct high-level event using the domain knowledge from the actual process model by the special attribute in an event log. 
In addition, this paper assumes that multiple high-level events are executed in parallel. 
This allows us to interpret a sequence of identical values as the single instance of a high-level event. 

A general approach to the representation of multi-level event logs and corresponding multi-level hierarchical models was studied in \cite{leemans2020using}.
The authors especially note that this approach can combine multiple formalisms for representing different levels in a multi-level process models.

There are many ways of abstracting process models by reducing their size in order to make them more convenient to work with. 
Each method may be useful depending on a group of interrelated factors: the abstraction purposes, the presence of certain patterns and constructs, and the specifics of modeling notation. 
Reducing the size of the model by abstraction can be done as the ``convolution'' of groups of elements, or implemented by throwing some parts away (insignificant in a particular case). 
The importance of the low-level event log abstraction is emphasized, among the others, in \cite{Smirnov2012}. 

The researchers determine, which level of abstraction is appropriate for a particular case in different ways, but the main criterion is that the model should be readable and understandable. 
In \cite{gunther2007fuzzy}, the abstraction of a process model occurs through ``simplification'' automatically: a user determines only the desired degree of detail, but not the actual correctness of identifying high-level events.
Conversely, the paper \cite{maneschijn2022methodology} stressed the importance of the abstraction level dependence on the domain expert knowledge.

Petri nets \cite{Reisig13} can also be extended by adding the hierarchy as, e.g., in Colored Petri nets (CPN) \cite{jensen2009coloured}.
Hierarchy also allows one to construct more compact, readable and understandable process models. 
Hierarchy of CPN models can be used as an abstraction, in the case of two levels: a high-level \emph{abstract} model and a low-level \emph{refined} model. 
In our paper the high-level model is a model with abstract transitions. 
An abstract transition refers to a Petri net subprocess, which refines the activity represented by this high-level transition. 
The complete low-level (``flat'') process model can be obtained from a high-level model by substituting subprocesses for high-level transitions.

``Flat'' synthesis (when process model and log are at the same-level) is a standard process discovery problem, which has been extensively studied in the literature.
A wide range of process discovery algorithms supports the automated flat process model synthesis \cite{augusto2018automated}.
\emph{Inductive miner} \cite{Leemans13} is one of the most widely used process discovery algorithm that produces well-structured process models recursively built from building blocks for standard behavioral patterns.
They can be potentially used for constructing high-level process models.
However, this technique does not take the actual correspondence between low-level events and subprocesses.
In \cite{greco2008mining}, the authors also used the recognition of behavioral patterns in a process by a structural clustering algorithm and then define a specific workflow schema for each pattern. 

In \cite{li2010mining}, a two-phase approach to mining hierarchical process models was presented. 
Process models here were considered as interactive and context-depen\-dent maps based on common execution patterns. 
On the first phase, an event log is abstracted to the desired level by detecting relevant execution patterns. 
An example of such pattern is the maximal repeat that captures typical sequences of activities the log. 
Every pattern is then estimated by its frequency, significance, or some other metric needed for accurate abstraction. 
On the second phase, Fuzzy miner discovery algorithm \cite{gunther2007fuzzy} adapted to process maps discovery is applied to  the transformed log.

\emph{FlexHMiner} \cite{lu2020discovering} is a general algorithm based on process trees implemented in ProM software. 
The authors stresses the flexibility of this approach: to identify the hierarchy of events, the method supports both supervised methods and methods using the general knowledge on a process. 
The limitations of this method include the fact that each of the sub-processes can be executed only once, which means that the method is not suitable for processes with cycles. 

Detecting high-level events based on the patterns of behavior in an event log does not make it possible to refine the accuracy of abstraction based on the general knowledge of the system or provide it only partially.
Patterns provide only the ability to change the scale, but not to participate in the selection of correct high-level events. 
This could be useful only for a superficial analysis.
However, there is a real risk of combining unrelated low-level events into a single high-level event only because they are executed sequentially, but not because they belong to the same logical component of a system.

A large amount of literature is devoted to the problem of discovering structured models from event logs.
Researchers offer different techniques to improve the structure of discovered models, e.g., in \cite{Fstr07}, and to produce already well-structured process models \cite{MpPM14,Pedro16}.
Different ways of detecting subprocesses in event logs using low-level transition systems were discussed in \cite{LocEv15,Kalenkova14-1,Kalenkova14}.
These papers do not consider mining hierarchical process models from event logs.

In our previous paper \cite{begicheva2017discovering}, we presented an algorithm for the discovery of a high-level process model from the event log for acyclic processes. 
The method takes the initial data about abstraction in the form of a set of detailed events grouped into high-level ones, which means that any method of identifying abstract events can potentially be used, including those based on expert knowledge. 
Moreover, the algorithm is based on event logs pre-processing.
This includes converting an initial event log into a high-level form and improving a target high-level process model
After pre-processing, we can use any of the existing process discovery algorithms suitable for the same-level (flat) process discovery. 
Both possibilities of using existing approaches as components make the earlier proposed algorithm flexible.

This paper expands the conditions of the applicability of the algorithm from \cite{begicheva2017discovering} since  it only works for acyclic models. 
For the algorithm to find and process potential cycles in the event log, we will reuse the method for detecting the repetitive behavior in a event log proposed in \cite{alvarez2020identifying}, which partially covers the general solution of the cycle detection problem.

\section{Preliminaries}\label{sec3}

By $\Nat$  we denote the set of non-negative integers.

Let $X$ be a set. A \emph{multiset} $m$ over a set $X$ is a mapping: $m : X \rightarrow \Nat$, where $\Nat$ -- is the set of natural numbers (including zero), i.e., a multiset may contain several copies of the same element. 
For an element $x\in X$, we write  $x\in m$ , if $m(x) >0$.
For two multisets $m,m'$ over $X$ we write $m\subseteq m'$  iff $\forall x\in X: m(x) \leq m'(x)$ (the inclusion relation). 
The sum, the union and the subtraction of two multisets $m$ and $m'$ are defined as usual:
$\forall x\in X: (m+m')(x)=m(x)+m'(x), (m\cup m')(x)=max(m(x),m'(x)), (m-m')(x)=m(x) - m'(x)$, if  $m(x) - m'(x) \geq 0$, otherwise $(m-m')(x)= 0$.
By $\mathcal{M}(X)$ we denote the set of all multisets over $X$.

\smallskip

For a set $X$, by $X^*$ with elements of the form $\langle x_1,\dots, x_k \rangle$ we denote the set of all finite sequences (words) over $X$, $\langle \rangle$ denotes the empty word, \ie the word of zero length. Concatenation of two words $w_1$ and $w_2$ is denoted by $w_1\cdot w_2$.

Let  $Q \subseteq X$ be a subset of $X$. The 
 projection $\restriction_Q : X^*  \rightarrow Q^*$ is  defined recursively as follows: 
 ${\langle \rangle}{\restriction_Q}  = \langle \rangle$,  and for $\sigma \in X^*$ and $x \in X$: 
$$(\sigma	\cdot \langle x\rangle) {\restriction_Q}= 
    \begin{cases} 
    \sigma {\restriction_Q}           $ if $ x\notin Q \\
   \sigma {\restriction_Q}  \cdot  \langle x \rangle $ if $ x \in Q
    \end{cases} $$

\smallskip

We say that $X = X_1 \cup X_2 \cup \dots  \cup X_n$ is a partition of the set $X$ if for all $1{\leq}i,j{\leq}n $ such that 
$i\not = j$ we have $X_i \cap X_j = \emptyset$. 

\subsection{Petri Nets}
   

Let $P$ and $T$  be two disjoint finite sets of places and transitions respectively, and $F : (P \times T) \cup (T \times P) \to \Nat$ be an arc-weight function. 
Let also $A$ be a finite set of \emph{event names} (or \emph{activities}) representing observable actions or events, $\tau$ --- a special label for \emph{silent} or invisible action, $\lambda : T \to A\cup \{\tau\}$ is a transition labeling function. 
Then $N = (P,T,F, \lambda)$ is a \emph{labeled Petri net}. 

Graphically, a Petri net is designated as a bipartite graph, where places are represented by circles, transitions by boxes, and the flow relation $F$ by directed arcs.


A \emph{marking} in a Petri net $N = (P,T,F, \lambda)$ is a function $m : P \to \Nat$  mapping  each place to some  number of tokens (possibly zero). Hence, a marking in a Petri net may be considered as a multiset over its set of places.  
Tokens are graphically designated by filled circles, and then a current marking $m$  is  represented by putting $m(p)$ tokens into each place $p \in P$.
A \emph{marked Petri net} $(N,m_0)$ is a Petri net $N$ together with its initial marking $m_0$.

For  transition $t \in T$, its \emph{preset}  (denoted $\pre{t} $)  and its \emph{postset} (denoted $\post{t}$) are defined as  sets of its input and output places respectively, \ie 
$\pre{t} = \{p \mid F(p,t)\not = 0\}$ and $\post{t}= \{p \mid F(t,p)\not = 0\}$.

A transition $t \in T$ is \emph{enabled} in a marking $m$, if for all $p \in \pre{t}$, $m(p)  \geq F(p,t)$. An enabled transition $t$ may \emph{fire} yielding a new marking $m'$, such that $m'(p) = m(p) - F(p,t) + F(t,p)$ for each $p \in P$ (denoted $m\tu{\lambda(t)} m'$, or just $m\to m'$).
A marking $m'$ is reachable from a marking $m$, if there exists a sequence of firings $m=m_0 \to m_1 \to \dots m_k=m'$. By 
$\reachable{N}{m}$ we denote the set of all markings reachable from marking $m$ in a net $N$.

Let $(N,m_0)$ be a marked Petri net with transitions labeled by activities from $A\cup\{\tau\}$, and let $m_0 \tu{a_1} m_1 \tu{a_2} \dots$ be a finite or infinite sequence of firings in $N$, which starts from the initial marking $m_0$ and cannot be extended. Then a sequence of observable activities $\rho$, such that $\rho = 
\langle a_1, a_2,\dots \rangle{\restriction_A}$, is called a \emph{run}, \ie a run is a sequence of observable activities representing a variant of Petri net behavior. For a finite run $\rho$, which corresponds to a sequence of firings $m_0\tu{a_1}\dots \tu{a_k}m_k$, we call $m_0$ and $m_k$ its initial and final markings respectively.

A transition $t\in T$ is called \emph{dead} for a marked net $(N,m_0)$, if for each reachable marking $m\in \reachable{N}{m_0}$, $t$ is not enabled in $m$.




\begin{figure}[ht]
 
    \centering
    \includegraphics[width=0.7\textwidth]{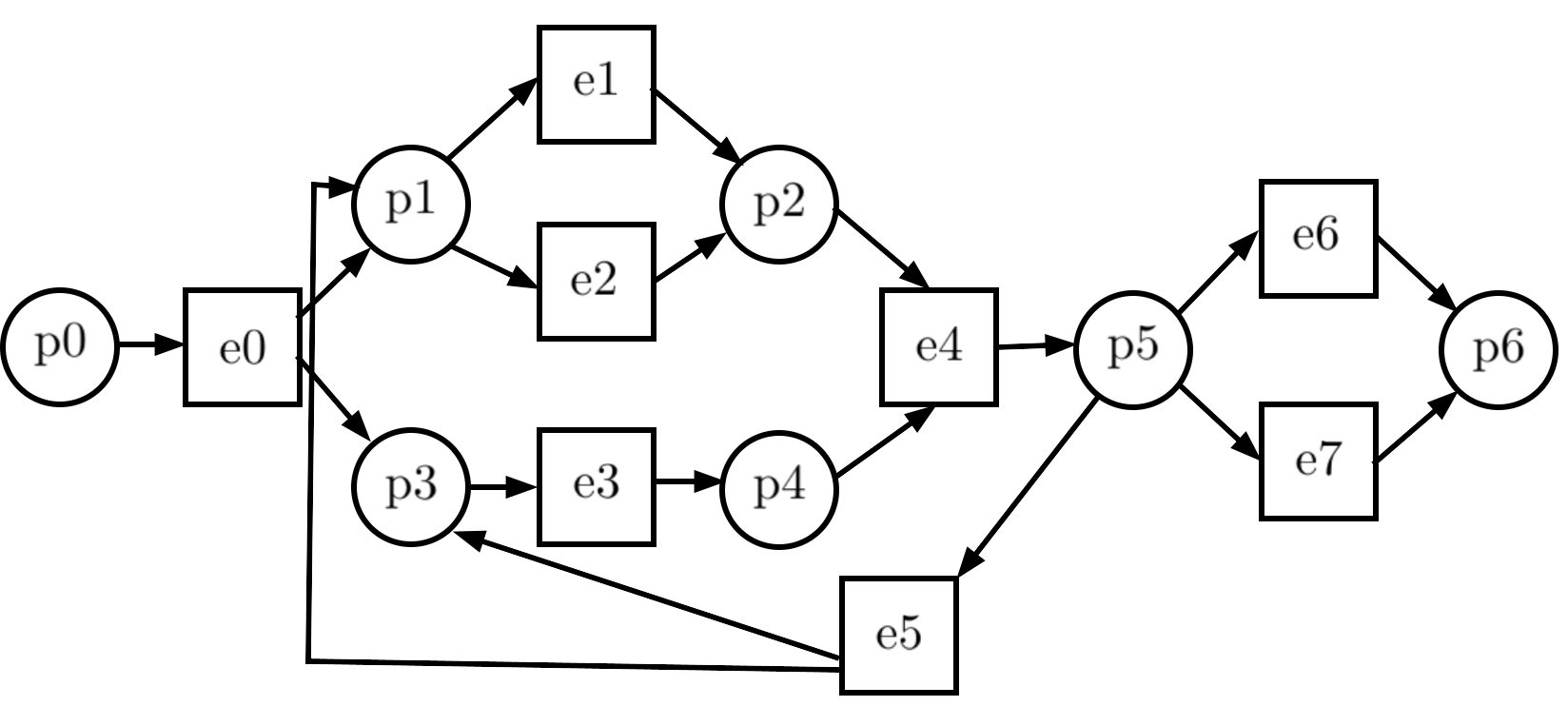}
    \caption{A workflow net for handling compensation requests}   
    \label{fig:petrinet}
\end{figure}

\smallskip

In our study we consider  \emph{workflow nets} --- a special subclass of Petri nets 
\cite{van2002workflow} 
for workflow modeling. A \emph{workflow net} is a (labeled) Petri net with two special places: $i$ and $f$. These places mark the beginning and the ending of a workflow process.

A (labeled) marked Petri net $N = (P,T,F,\lambda, m_0)$ is called a workflow net (WF-net) if the following conditions hold:
\begin{enumerate}
\item	There is one source place $i\in P$ and one sink place $f\in P$, such that  $\pre{i} = \post{f} = \emptyset$.
\item	Every node from $P \cup T$ is on a path from $i$ to $f$.
\item	The initial marking $m_0$ in $N$ contains the only token in its source place.
\end{enumerate}

Given a WF-net, by $[i]$ we denote its initial marking with the only token in place $i$, and by $[f]$ --- its \emph{final} marking with the only token in place $f$.

 An example of a workflow net that simulates a simple process of handling ticket refund requests, is shown in Fig.~\ref{fig:petrinet} \cite{Aalst11}. Here $p_0$ is the source place, and $p_6$ --- the sink place. 

\smallskip

Soundness \cite{van2002workflow} is the main correctness property for workflow nets.
A WF-net $N = (P,T,F,\lambda, [i])$ is called \emph{sound}, if 
\begin{enumerate}
    \item For any marking $m \in R (N,[i])$, $[f]\in \reachable{N}{m}$;
    \item If for some $m \in R (N,[i])$, $[f]\subseteq m$, then $m=[f]$;
    \item There are no dead transitions in $N$.
\end{enumerate}



\subsection{Event Logs}

Most information systems record  history of their process execution into event logs. An \emph{event record} usually contains case ID, an activity name, a time step, and some information about resources, data, etc. For our study, we use case IDs for splitting an event log into  traces, timestamps --- for ordering events within each trace, and abstract from event information different from event names (activities).

Let A be a finite set of activities. A \emph{trace} $\sigma$ is a finite sequence of activities from $A$, i.e., $\sigma \in A^*$. By $\# a(\sigma)$ we denote the number of occurrences of activity $a$ in  trace $\sigma$. 

An \emph{event log} $L$ is a finite multi-set of traces, i.e., $L \in\mathcal{M}(A^*)$. 
A log $L'$ is called  a \emph{sub-log} of $L$, if $L'\subseteq L$.
Let $X \subseteq A$. We extend projection $\restriction_X$ to event logs, i.e., for an event log $L \in M(A^*)$, its projection is a sub-log $L{\restriction_X}$, defined as the multiset of projections of all traces from $L$, \ie $L{\restriction_X}(\sigma{\restriction_X}) = L(\sigma)$ for all $\sigma\in L$.

An important question is whether the event log matches the behavior of the process model and vice versa. There are several metrics to measure  conformance between a WF-net and an event log. Specifically, \emph{fitness} defines to what extend the log can be replayed by the model. 

  Let $N$ be a WF-net with transition labels from $A$, an initial marking $[i]$, and a final marking $[f]$. Let $\sigma$ be a trace over $A$. We say that trace $\sigma = \langle a_1, \dots , a_k\rangle$ \emph{perfectly fits} $N$, if $\sigma$ is a run in $N$ with initial marking $[i]$ and final marking $[f]$.
A log $L$ perfectly fits $N$, if every trace from $L$ perfectly fits $N$.

\section{Discovering Hierarchical WF-Nets}
\subsection{Hierarchical WF-Nets}
\label{subsec:HWF}


Let $A$ denote the set of low level process activity names.
Let $\Tilde{A} = \{\alpha_1, \alpha_2, \dots,$ $\alpha_k\}$ denote the set of sub-process names, which represent high-level activity names, respectively.

Here we define \emph{hierarchical workflow} (HWF) nets with two levels of representing the process behavior.
The \emph{high level} is a WF-net, whose transitions are labeled by the names of sub-processes from $\Tilde{A}$.
The \emph{low level} is a set of WF-nets corresponding to the \emph{refinement} of transitions in a high-level WF-net.
Transitions in a low level WF-net are labeled by the names of activities from $A$.
Below we provide the necessary definitions and study the semantics of HWF-nets.

An HWF-net is a tuple $\mathcal{N} = (\Tilde{N}, N_1, N_2, \dots, N_k,\ell)$, where:
\begin{enumerate}
    \item $\Tilde{N} = (\Tilde{P}, \Tilde{T}, \Tilde{F}, \Tilde{\lambda}, \Tilde{[i]})$ is a high-level WF-net, where $\Tilde{\lambda} \colon \Tilde{T} \to \Tilde{A}$ is a bijective transition labeling function, which assigns sub-process names to transitions in $\Tilde{N}$;
    \item $N_i = (P_i, T_i, F_i, \lambda_i, [i]_i)$ is a low-level WF-net for every $i \in [1, k]$ with a transition labeling function $\lambda_i \colon T_i \to A_i$, where $A_i \subseteq A$ is a  subset of low level activities for $N_i$;
    \item $\ell \colon \Tilde{A} \to \{N_1, N_2, \dots, N_k\}$ is a bijection which establish the correspondence between a sub-process name (transition in a high-level WF-net) and a low-level WF-net.
\end{enumerate}

Accordingly, every transition in a high-level WF-net $\mathcal{N}$ has a corresponding low-level WF-net modeling the behavior of a sub-process.
The example of an HWF-net is shown in Figure \ref{fig:hwf1}.
We only show the refinement of two transitions $t_1$ and $t_2$ in the high-level WF-net $\Tilde{N}$ with two low-level WF-nets $N_1$ and $N_2$.
They represent the detailed behavior of two sub-processes $\alpha_1$ and $\alpha_2$ correspondingly.

\begin{figure}[h]
        \centering
     \includegraphics[width=0.7\textwidth]{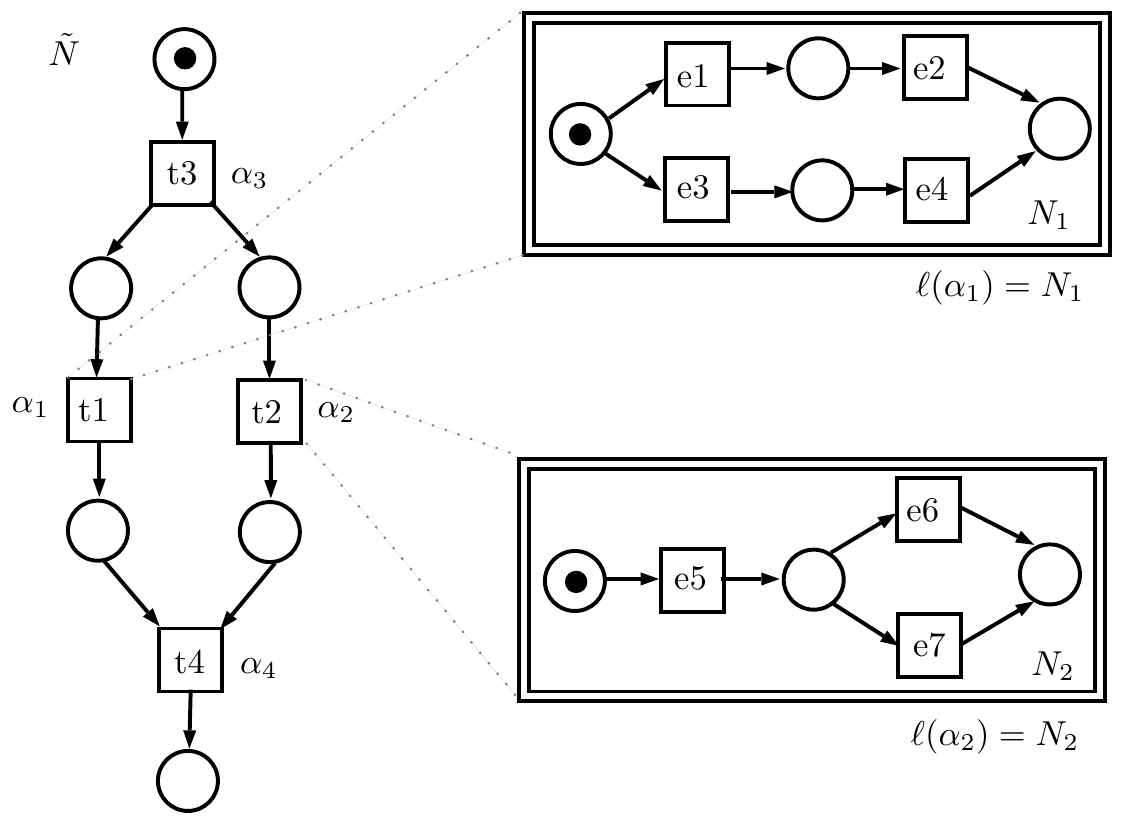}
     \caption{A HWF-net with two refined transitions}
    \label{fig:hwf1}
\end{figure}

We next consider the operational semantics of an HWF-nets by defining its run.
For what follows, let $\mathcal{N} = (\Tilde{N}, N_1, N_2, \dots, N_k, \ell)$ be an HWF-net.

Let $\Tilde{m}$ be a reachable marking in a high-level WF-net of an HWF-net $\mathcal{N}$ and $T_{\Tilde{m}}$ be a set of transitions enabled at $\Tilde{m}$.
Intuitively, a set of transitions enabled in a high-level WF-net $\Tilde{m}$ corresponds to a set of sub-processes for which we can start to fire their low level transitions.

If some transitions in a high-level WF-net enabled at $\Tilde{m}$ share common places, then there is a \emph{conflict}, and we can choose, which sub-process to start, while the other sub-rocesses corresponding to conflicting transitions in a high-level WF-net will not be able to run.

If some transitions in a high-level WF-net enabled at $\Tilde{m}$ do not share common places, then they are enabled \emph{concurrently}, and we can start all sub-processes corresponding to these concurrently enabled transitions using the ordinary interleaving semantics.

The firing of a transition in a high-level WF-net will be complete when the corresponding sub-process reaches its final marking.

For instance, let us consider the HWF-net shown in Figure \ref{fig:hwf1}.
After firing  high-level transition $t_3$ and executing a corresponding sub-process $\alpha_3$ (not provided in Figure \ref{fig:hwf1}), two high-level transitions $t_1$ and $t_2$ become enabled.
They do not share common places, i.e.,  high-level transitions $t_1$ and $t_2$ are enabled concurrently.
Thus, the corresponding sub-processes $\alpha_1$ (low-level WF-net $N_1$) and $\alpha_2$ (low-level WF-net $N_2$) can also be executed concurrently.
We can obtain a sequence $\rho = \langle \alpha_3, e_1, e_5, e_2, e_6, \alpha_4 \rangle$, which will represent a possible run of an HWF-net from Figure \ref{fig:hwf1}.
High-level activities $\alpha_3$ and $\alpha_4$ should also be replaced with corresponding sub-process runs.

Lastly, we give a straightforward approach to transforming an HWF-net $\mathcal{N} = (\Tilde{N}, N_1, N_2, \dots,$ $ N_k, \ell)$ to the corresponding \emph{flat} WF-net denoted by $\mathbf{fl}(\mathcal{N}) = (P, T, F, \lambda, [i])$.
We need to replace transitions in a high-level WF-net with their sub-process implementation given by low-level WF-net corresponding by $\ell$.
When a transition $t$ in a  high-level WF-net $\Tilde{N}$ is replaced by a low-level WF-net $N_i$, we need to fuse a source place in $N_i$ with all input places of $t$ and to fuse a sink place in $N_i$ with all output places of $t$.

For instance, the flat WF-net $\mathbf{fl}(\mathcal{N})$ constructed for the HWF-net, shown in Figure \ref{fig:hwf1}, is provided in Figure \ref{fig:flatwf}.
We replaced transition $t_1$ with $N_1$ and transition $t_2$ with $N_2$ as determined by the labels of low-level WF-nets.
This figure also shows the double-line contours of corresponding  high-level transitions.

\begin{figure}[!h]
        \centering
    \includegraphics[width=0.35\textwidth]{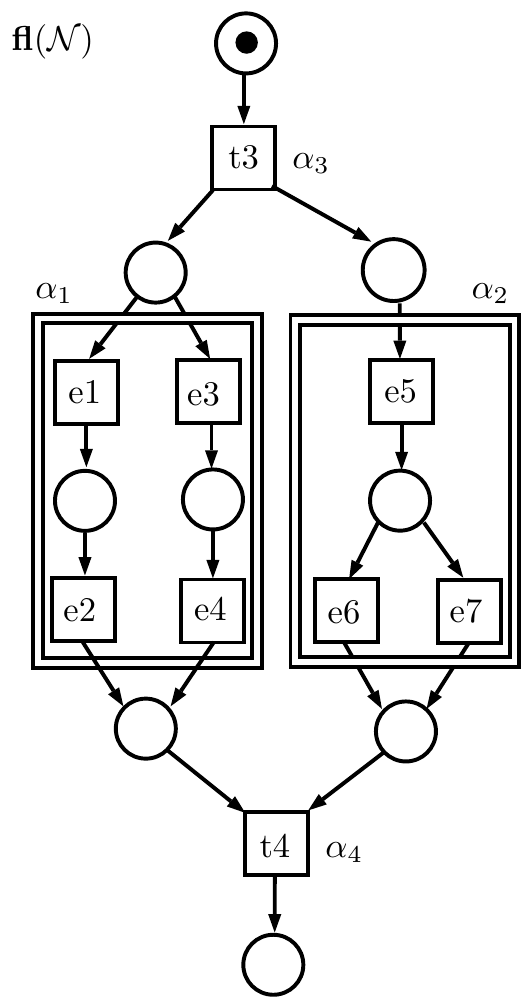}
    \caption{The flat WF-net for the HWF-net in Fig.~\ref{fig:hwf1}}
    \label{fig:flatwf}
\end{figure}

Proposition \ref{prop:conn} gives the main connection between an HWF-net and its flat representation. 

\begin{prop}\label{prop:conn}
    Let $\mathcal{N} = (\Tilde{N}, N_1, N_2, \dots, N_k, \ell)$ be a HWF-net, and $\mathbf{fl}(\mathcal{N})$ be the corresponding flat WF-net.
    A sequence $\rho$ is a run in $\mathcal{N}$ if and only if $\rho$ is a run in $\mathbf{fl}(\mathcal{N})$.
\end{prop}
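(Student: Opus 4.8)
The plan is to establish a tight step-by-step correspondence (a \emph{bisimulation}) between the operational behaviour of $\mathcal{N}$ under the two-level semantics and that of $\mathbf{fl}(\mathcal{N})$ under ordinary token flow, and then to lift it from single firings to non-extendable firing sequences, and hence to runs. The key structural observation is that $\mathbf{fl}(\mathcal{N})$ is built so that each sub-net $N_i$ communicates with the rest of the net only through two \emph{interface places}: its source, glued onto the input places $\pre{t}$ of the corresponding high-level transition $t = \Tilde{\lambda}^{-1}(\alpha_i)$, and its sink, glued onto the output places $\post{t}$. Note that $\mathbf{fl}(\mathcal{N})$ contains no transition for $t$ itself: its transitions are exactly those of the sub-nets, labelled over $A\cup\{\tau\}$, so the observable labels on both sides range over $A$. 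Consequently every reachable marking $m$ of $\mathbf{fl}(\mathcal{N})$ decomposes canonically as a high-level marking $\Tilde{m}$ on the places inherited from $\Tilde{N}$ (including the interface places) together with, for each $i$, an internal marking $m_i$ supported on the proper internal places of $N_i$. I would record this decomposition as an invariant of $\mathbf{fl}(\mathcal{N})$ and use it to define the correspondence with the global states of $\mathcal{N}$, namely a high-level marking together with the internal markings of the currently active sub-processes.

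For the forward direction, let $\rho$ be a run of $\mathcal{N}$ and consider the firing sequence produced by its two-level execution. Each low-level firing of an active sub-process $N_i$ is literally a firing of the same transition inside the embedded copy of $N_i$ in $\mathbf{fl}(\mathcal{N})$, touching only the internal places of $N_i$ and its interface places. Starting a sub-process corresponds to the source of $N_i$ (that is, $\pre{t}$) carrying a token, and completing it corresponds to the sub-process depositing a token on the sink of $N_i$ (that is, $\post{t}$), which returns control to the high-level part. Replaying these firings in the same order in $\mathbf{fl}(\mathcal{N})$, starting from $[i]$, therefore yields a legal firing sequence whose projection onto $A$ is again $\rho$; once maximality is matched (see below), this shows that $\rho$ is a run of $\mathbf{fl}(\mathcal{N})$.

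For the converse, I would use the interface invariant to show that every firing sequence of $\mathbf{fl}(\mathcal{N})$ realises some two-level execution of $\mathcal{N}$. Since $N_i$ is connected to its environment only through $\pre{t}$ and $\post{t}$, its internal firings affect neither the high-level places nor the internal places of any other sub-net, and therefore commute with every step not belonging to $N_i$. Using this I can read a flat firing sequence as the two-level semantics prescribes: whenever $\pre{t}$ becomes marked, the sub-process $\alpha_i$ is available to start, the firings lying inside $N_i$ constitute its (possibly unfinished) execution, and the token movements on the high-level places realise exactly the enabling, conflict resolution, and concurrency of the high-level transitions. The resulting execution of $\mathcal{N}$ has the same low-level label sequence, so, again up to maximality, any run of $\mathbf{fl}(\mathcal{N})$ is a run of $\mathcal{N}$.

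The step I expect to be the main obstacle is making the commutation and regrouping argument rigorous in the presence of concurrency, iteration, and re-entrancy --- that is, when a high-level transition is re-enabled while its sub-process is still running and several tokens are simultaneously present inside one copy of $N_i$, or when several sub-processes are active at once. Here the interface property is exactly what guarantees that distinct activations of $N_i$, and activations of different sub-nets, do not interfere, so that the interleaving semantics of $\mathcal{N}$ captures precisely the admissible orderings in $\mathbf{fl}(\mathcal{N})$. A second point requiring care is matching maximality: I must verify that a firing sequence of $\mathbf{fl}(\mathcal{N})$ is non-extendable if and only if the corresponding two-level execution cannot be continued, so that runs --- the projections onto $A$ of non-extendable firing sequences --- correspond on both sides.
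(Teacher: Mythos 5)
Your proposal is correct and takes essentially the same route as the paper: the paper's entire proof is the single assertion that the claim ``directly follows from the construction of the flat WF-net and from the way we define the sequential semantics of a hierarchical WF-net,'' and your marking-decomposition/bisimulation argument is exactly that correspondence made explicit. If anything, your sketch is more careful than the paper's, since you also name the genuine subtleties---re-entrant activations of a sub-net and matching non-extendability (maximality) of firing sequences on both sides---which the paper's one-line proof passes over silently.
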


In other words, the set of all possible runs of a HWF-net is exactly the same as the set of all possible runs of the corresponding flat WF-net.
Proof of this proposition directly follows from the construction of the flat WF-net and from the way we define the sequential semantics of a hierarchical WF-net.

To sum up, the constructive representation of the HWF-net sequential semantics fully agrees with the ordinary Petri net firing rule and the definition of a run discussed in Section 3.

\subsection{Problem Statement}
Let $L$ be a log over a set $A$ of activities, and let $A=A_1\cup A_2 \cup\dots\cup A_k$ be a partition of $A$. Let also $\Tilde{A} = \{\alpha_1, \alpha_2,\dots\alpha_k\}$ be a set of high-level activities (sub-process names).

The problem is to construct a HWF-net $\mathcal{N}= (\Tilde{N},N_1,N_2,\dots,N_k,l)$, where for each $i\in [1, k]$, $N_i$ is a sub-process (WF-net) labeled by $\alpha_i$  over the set of activities $A_i$. Runs of $\mathcal{N}$ should conform to traces from $L$.

We suppose that partitioning $A$ into subsets $A_1, \dots A_k$ is made either by an expert, or automatically based on some information contained in extended action records, such as resources or data. In Section~\ref{evaluation} we give two examples of partitioning activities for a real log. Then we consider that a sub-process is defined by its set of activities, and we suppose that sets of activities for two sub-processes do not intersect. If it is not the case and two sub-processes include some common activities like 'close the file', one can easily distinguish them by including resource or file name into activity identifier.

One more important comment concerning partitioning activities: we suppose that it does not violate the log control flow. Specifically, if there are iterations in the process, then for a set of iterated activities $B$ and for each sub-process activities set $A_i$, we suppose that either $B\cap A_i = \emptyset$, or $B\subseteq A_i$, or $A_i\subseteq B$. Note that this is a reasonable requirement, taking into account the concept of a sub-process. If still it is not true, 
\ie only a part of $A_i$ activities are iterated, then the partition can be refined,  exactly $A_i$ can be splitted into two subsets, within and out of iteration.

\subsection{Basics of the Proposed Solution}
Now we describe the main ideas and the structure of the algorithm for discovery of hierarchical WF-net from event log.

Let once more $L$ be a log with activities from $A$, and let $A=A_1\cup A_2 \cup\dots\cup A_k$ be a partition of $A$. Let also $\Tilde{A} = \{\alpha_1, \alpha_2,\dots\alpha_k\}$ be a set of high-level activities (sub-process names).
A hierarchical WF-net $\mathcal{N}$ consists of a high-level WF-net $\Tilde{N}$ with activities from $\Tilde{A}= \{\alpha_1,\dots,\alpha_k\}$, and $k$  sub-process WF-nets $N_1,N_2,\dots,N_k$, where for each $N_i$, all its activities belong to $A_i$.

Sub-process WF-nets $N_1,N_2,\dots,N_k$ can be discovered directly. To discover $N_i$ we filter log $L$ to $L_i = L{\restriction_{A_i}}$.
Then we apply one of popular algorithms (\eg Inductive Miner) to discover WF-model from the log $L_i$. Fitness and precision of the obtained model depend on the choice of the discovery algorithm.

Discovering high-level WF-model is not so easy and is quite a challenge. Problems with it are coursed by possible interleaving of concurrent sub-processes and iteration. A naive solution could be like follows: in the log $L$ replace each activity $a\in A_i$ by $\alpha_i$ --- the name of the corresponding sub-process. Remove 'stuttering', \ie replace wherever possible several sequential occurrences of the same high-level activity by one such activity. Then apply one of popular discovery algorithms to the new log over the set $\Tilde{A}$ of activities. However, this does not work.

\begin{figure}[h]
    \centering
    \begin{subfigure}{0.5\textwidth}
		     \includegraphics[width=1\textwidth]{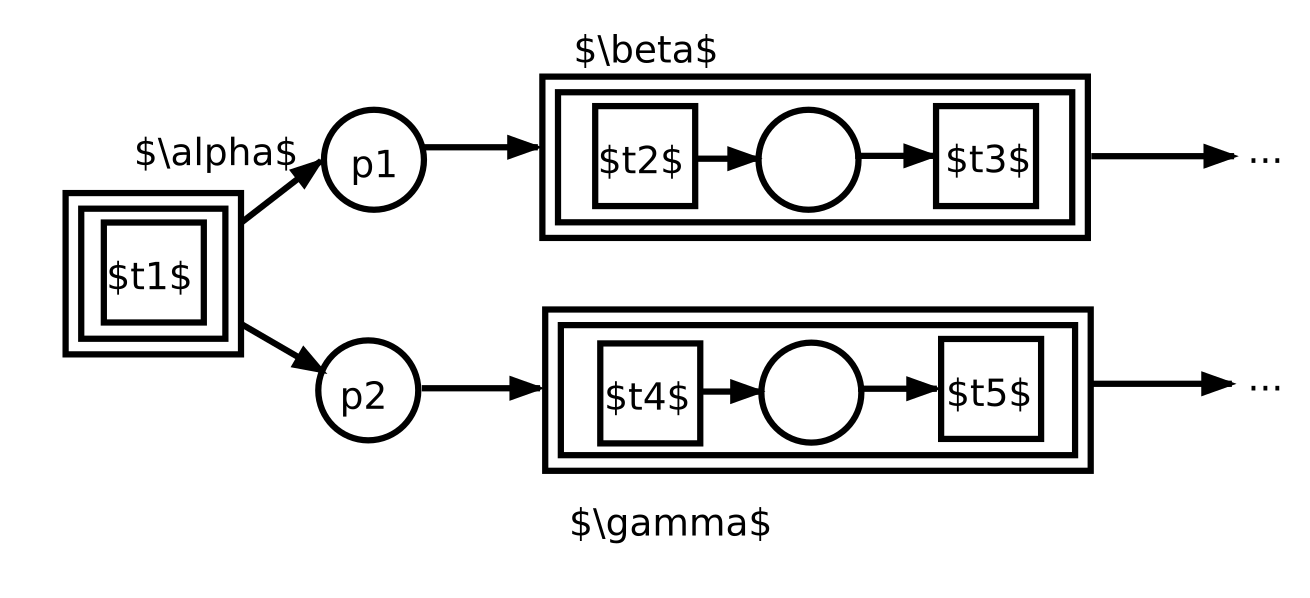}		
        \caption{concurrent sub-processes}
        \label{fig:interleaving}
    \end{subfigure}\\
    \begin{subfigure}{0.9\textwidth}
        \includegraphics[width=1\textwidth]{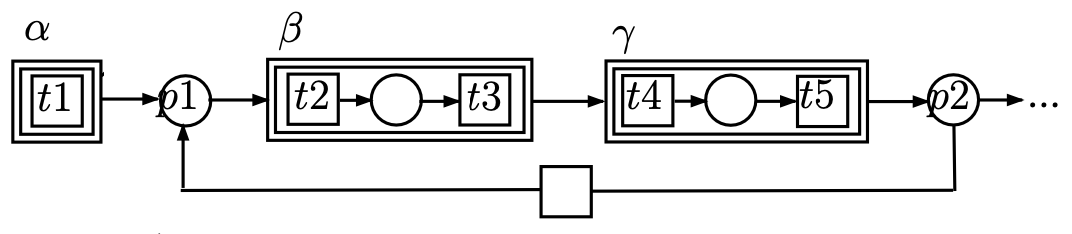}
        \caption{sub-processes inside a loop}
        \label{fig:cycles}
    \end{subfigure}

    \caption{Interleaving and iteration of sub-processes}
    \label{fig:subproc_problems}
\end{figure}

Consider examples in Fig.~\ref{fig:subproc_problems}. Fragment (a) in Fig.\ref{fig:subproc_problems} shows two concurrent sub-processes $\beta$ and $\gamma$, going after sub-process $\alpha$, which consists of just one transition. After replacing of low level activities with the corresponding sub-process names and removing stuttering, for the fragment (a), we get runs $\langle \alpha,  \beta,  \gamma, \dots\rangle$,  $\langle \alpha, \gamma, \beta, \dots\rangle$, $\langle \alpha, \beta,\gamma, \beta, \gamma, \dots\rangle$,
$\langle \alpha, \gamma, \beta, \gamma,  \beta,\dots\rangle$, etc.  
Fragment (b) in Fig.~\ref{fig:subproc_problems} shows a cycle. The body of this cycle is the sequence of two sub-processes $\beta$ and $\gamma$. Among runs for the fragment (b) we also have $\langle \alpha,  \beta,  \gamma, \dots\rangle$,  $\langle \alpha, \beta,\gamma, \beta, \gamma, \dots\rangle$. So, iterations should be considered separately. 

Discovering high-level WF-nets for acyclic models (\ie logs without iteration) was studied earlier in \cite{begicheva2017discovering}, where all details can be found. Here we call  this algorithm as Algorithm $\mathfrak{A}_0$ and illustrate it with the  example in Fig.~\ref{fig:subproc_problems}(a). The algorithm $\mathfrak{A}_0$ to discover a high-level WF-model from a log $L$ without iterations reduces this problem to the classical discovery problem, which can be solved by many popular algorithms, such as e.g. Inductive Miner.  Therefore, we parameterize Algorithm $\mathfrak{A}_0$ with Algorithm~$\mathfrak{D}$ for solving classical discovery problem.


\smallskip

\textit{Algorithm $\mathfrak{A}_0(\mathfrak{D})$}:

\begin{enumerate}
    \item[\textit{Step 1.}] For all traces in $L$, replace low level activities with the corresponding sub-process names and remove stuttering.
    \item[\textit{Step 2.}]  For each trace $\sigma$ with more than one occurrence of the same activity  replace $\sigma$ with the set of all possible clones of $\sigma$ by removing for each activity $\alpha$, all its multiple occurrences except one, and by removing (newly formed) stuttering. For example, the trace $\langle \alpha, \beta,\gamma, \beta, \gamma, \dots\rangle$ will be replaces by two traces $\langle \alpha, \beta,\gamma, \dots\rangle$ and $\langle \alpha, \gamma, \beta, \dots\rangle$ obtained by keeping the first occurrences of $\beta$ and $\gamma$, and correspondingly by keeping the first occurrence of $\gamma$ and the second occurrence of $\beta$. In this example, constructing clones by keeping other occurrences of $\gamma$ does not generate new traces.
    \item[\textit{Step 3.}]  Let $\Tilde{L}$ be the log resulting from executing two previous steps.  To obtain a high-level WF-net $\Tilde{N}$, apply a popular algorithm $\mathfrak{D}$ to discover a WF-net from event log $\Tilde{L}$.
\end{enumerate}

It was proved in~\cite{begicheva2017discovering} that if an algorithm used in Step~3 of the Algorithm $\mathfrak{A}_0$ for each input log $L$ discovers  a WF-net  perfectly fitting $L$, then the Algorithm $\mathfrak{A}_0$, given a log $L$ without repetitive behavior, produces a HWF-net $\mathcal{N}$, such that $\mathbf{fl}(\mathcal{N})$ perfectly fits $L$.

\medskip

Now we come to logs with repetitive behavior. The main idea here is to represent a loop body as a subset of its activities. Then a body of a loop can be considered as a sub-process with a new loop sub-process name.

To discover repetitive behavior we use  methods from \cite{alvarez2020identifying,tapia2017discovering}, which allow  to determine causal, concurrency, and repetitive relations between events in an event log.
Actually, for our purpose we need only repetitive relations and based on them loop discovery. The algorithm in \cite{tapia2017discovering} (we call it here as  Algorithm $\mathfrak{B}$) allows us to discover elementary loop bodies as sets of their activities and process them recursively, starting with inner elementary loops.  Thus, at every iteration we deal with a loop body without inner loops. To obtain a sub-trace, corresponding to a loop body with a set of activities $B$ from a log trace $\sigma$ we construct the projection $\sigma{\restriction_B}$. After filtering all current traces this way, we get an event log for discovering a WF-net modeling the loop body behavior by applying Algorithm $\mathfrak{A}_0$.

Then the result high-level WF-net is built recursively by substituting discovered loop bodies for loop sub-process  names, starting from inner loops.

\section{Algorithm for Discovering  HWF-Nets from Low Level Event Logs}\label{sec5}

Here we describe  our discovery algorithm  in more details.

Let $A$ be a set of activities and $L$ --- a log over $A$. Let then $A=A_1\cup\dots\cup A_k$ be a partition of $A$, where for $i\in [1, k]$, $A_i$ is a set of activities of a sub-process $\alpha_i$ and $\Tilde{A}= \{\alpha_1, \dots, \alpha_k\}$ --- a set of sub-process names.

Then Algorithm $\mathfrak{A}(\mathfrak{D})$  constructs a HWF-net $\mathcal{N} = (\Tilde{N}, N_1, N_2, \dots, N_k,\ell)$ with  high-level WF-net $\Tilde{N} = (\Tilde{P}, \Tilde{T}, \Tilde{F}, \Tilde{\lambda}, \Tilde{[i]})$, where
$\Tilde{\lambda} \colon \Tilde{T} \to \Tilde{A}$ and for each $\alpha_i\in \Tilde{A}$, $\ell(\alpha_i) = N_i$, \ie sub-process name $\alpha_i$ corresponds to  low-level WF-net $N_i$ in $\mathcal{N}$.

\smallskip

\textit{Algorithm $\mathfrak{A}(\mathfrak{D})$}: 

\smallskip
By $\Tilde{B}$ we denote a  set of loop body names with elements $\{\beta, \beta',\dots\}$ and by $\ell_B$ --- a function which maps a name from $\Tilde{B}$ to a WF-net that implements the loop body. For a WF-net $N$, denote by $\Loop(N)$  a WF-net that is a loop with  body $N$.

\begin{enumerate} 
\item[\textit{Step 0.}] Set $\Tilde{B}= \emptyset$ and $\ell, \ell_B$ to be two functions with  empty domains. 
\item[\textit{Step 1.}]
    Apply Algorithm $\mathfrak{B}$ to $L$ to find a set $B$ of activities of some inner elementary loop body. If there is no repetitive behavior in $L$, \emph{go to Step 2}, otherwise do the following.

    Construct the projection $L{\restriction_B}$ and apply Algorithm~$\mathfrak{A_0}(\mathfrak{D})$ to it (with respect to the partition $A=A_1\cup\dots\cup A_k$). 
    
    Let $\Tilde{N}$ be the result high level WF-net over the set $\Tilde{A}$ of sub-process names,  $N_{i_1},\dots, N_{i_j}$ --- result WF-nets for sub-processes with names $\alpha_{i_1},\dots, \alpha_{i_j}$ respectively (for each $\alpha_i$, such that $A_i\subseteq B$, \ie for sub-processes  within the loop body), and let $\beta$  be a new name. Then
    \begin{itemize}
        \item 
        For each $\beta'\in \Tilde{B}$, if there is a transition labeled by $\beta'$ in some of 
        $N_{i_1},\dots, N_{i_j}$ or $\Tilde{N}$, replace this transition with  sub-process $\Loop(\ell_B(\beta'))$, as was done when constructing  a flat WF-net in Subsection~\ref{subsec:HWF}, \ie substitute inner loops in the loop body. Remove $\beta'$ from $\Tilde{A}$, $\Tilde{B}$ and from the domains of $\ell$ and $\ell_B$.
        \item
        Add $\beta$ to $\Tilde{B}$ and extend $\ell_B$ by defining $\ell_B(\beta) = \Tilde{N}$.
    Extend also $\ell$ by defining $\ell(A_{i_1}) = N_{i_1}, \dots, \ell(A_{i_j}) = N_{i_j}$.
        \item
    Replace by $\beta$ all occurrences of activities from $B$ in $L$ and remove stuttering.
        \item 
    If for some $i\in [1\dots k]$, $B\subseteq A_i$, then add $\beta$ to $A_i$ (and respectively to $A$) as one more activity. Otherwise, add $\beta$ to $A$, as well as add  $\{\beta\}$ to partition of $A$. (Thus, $\beta$ is both an activity and a process name, which  should not be confusing.)
    \end{itemize}
    
    \emph{Repeat Step 1}.
     
\item[\textit{Step 2.}] 
Apply Algorithm~$\mathfrak{A_0}(\mathfrak{D})$ to  current log $L$ with respect to the current partition of activities. Let $\Tilde{N}$ be a result high level WF-net.

\item[\textit{Step 3.}] 
While $\Tilde{B}$ is not empty, for each $\beta\in \Tilde{B}$,
 replace a transition labeled by $\beta$ in $\Tilde{N}$ with the sub-process $\Loop(\ell_B(\beta))$, as it is done in Step 1.
    \end{enumerate}
The resulting net $\Tilde{N}$ is a high-level WF-net for the HWF-net constructed by the Algorithm. Its low-level WF-nets are defined by function $\ell$, also built during the Algorithm operation. 
    

\bigskip

\textit{Correctness of the Algorithm~$\mathfrak{A}(\mathfrak{D})$} is justified by the following statement.

\begin{thm}
Let $A$ be a set of activities and $L$ --- a log over $A$. Let also $A=A_1\cup\dots\cup A_k$ be a partition of $A$, 
and $\Tilde{A}= \{\alpha_1, \dots, \alpha_k\}$ --- a set of sub-process names.

If Algorithm $\mathfrak{D}$ for any log $L'$ discovers a WF-net $N'$, such that $N'$ perfectly fits $L'$,
then Algorithm $\mathfrak{A}(\mathfrak{D})$  constructs a HWF-net $\mathcal{N} = (\Tilde{N}, N_1, N_2, \dots, N_k,\ell)$, such that 
$\mathbf{fl}(\mathcal{N})$ perfectly fits $L$.
\end{thm}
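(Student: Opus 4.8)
The plan is to use Proposition~\ref{prop:conn} to replace the goal ``$\mathbf{fl}(\mathcal{N})$ perfectly fits $L$'' by the equivalent goal that every trace $\sigma\in L$ is a run of $\mathcal{N}$ itself, so that the whole argument can be carried out in the hierarchical firing semantics of Subsection~\ref{subsec:HWF}, where concurrent sub-processes are interleaved. I would then argue by induction on the number $n$ of times Step~1 of $\mathfrak{A}(\mathfrak{D})$ fires, that is, on the number of elementary loop bodies abstracted during the run (nested bodies included, since Step~1 strips innermost loops first).

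For the base case $n=0$ the log $L$ contains no repetitive behaviour, Step~1 is skipped, and $\mathcal{N}$ is produced by the single call to $\mathfrak{A}_0(\mathfrak{D})$ in Step~2. The conclusion is then exactly the correctness result for $\mathfrak{A}_0$ established in \cite{begicheva2017discovering}: as $\mathfrak{D}$ returns a perfectly fitting WF-net on every input, $\mathfrak{A}_0(\mathfrak{D})$ returns an HWF-net whose flattening perfectly fits $L$.

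For the inductive step I would single out the first innermost body $B$ chosen in Step~1, the fresh name $\beta$, the body net $N_B=\ell_B(\beta)$ (with its own inner loops already substituted), and the reduced log $L'$ obtained by replacing each activity of $B$ by $\beta$ and deleting stuttering; running $\mathfrak{A}(\mathfrak{D})$ on $L'$ abstracts one loop fewer, so the induction hypothesis applies to it. Two facts drive the step. A \emph{loop-body} fact: since $B$ is elementary, the only repetition in $L{\restriction_B}$ comes from iterating the body, the cloning of Step~2 of $\mathfrak{A}_0$ collapses those iterations, and the resulting $N_B$ satisfies that $\sigma{\restriction_B}$ is a run of $\Loop(N_B)$ for every $\sigma\in L$. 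A \emph{folding} fact: replacing the unique $\beta$-labelled transition by $\Loop(N_B)$ --- precisely the substitution of Steps~1 and~3 --- lets a run of $\mathcal{N}$ be assembled for $\sigma$ by having the sub-process $\Loop(N_B)$ replay $\sigma{\restriction_B}$ while the surrounding transitions replay $\sigma$ restricted to the activities outside $B$.

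The folding fact is where I expect the real difficulty, and it is subtler than a naive ``lift the run of $L'$'' argument. When the loop is concurrent with another sub-process, the activities of $B$ are interleaved with other activities inside $\sigma$, stuttering removal (being only local to adjacent duplicates) does not merge them, and the cloning of $\mathfrak{A}_0$ treats repeated occurrences of $\beta$ as concurrency rather than iteration; consequently the intermediate net need not even have the corresponding abstracted trace of $L'$ among its runs, so the run for $\sigma$ must be constructed directly in $\mathcal{N}$ rather than lifted. The decisive observation is that we only need fitness, not precision: under the hierarchical interleaving semantics a single firing of the $\beta$-transition spans the whole execution of $\Loop(N_B)$ and may interleave with concurrently enabled transitions, so one high-level loop occurrence can emit all iterations of $\sigma{\restriction_B}$ in their original interleaved positions. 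The careful part is to verify that the reshuffled firing sequence is legal, i.e.\ that each intermediate marking is reachable when $B$-steps and non-$B$-steps are put back in the order they occur in $\sigma$; here the partition hypothesis (each $A_i$ is disjoint from, contained in, or contains the iterated set $B$) is what guarantees that tokens feeding $\Loop(N_B)$ and tokens feeding the concurrent part never compete, so the interleaving cannot deadlock. Nested loops are then discharged by applying the same two facts along the Step~1 recursion, innermost first, with Step~3 unfolding any remaining body names in the top-level net.
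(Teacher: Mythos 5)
Your proposal is correct and takes essentially the same route as the paper: the paper's own (deliberately informal) proof likewise combines the perfect-fitness theorem for $\mathfrak{A}_0(\mathfrak{D})$ on repetition-free logs as the base case, the correctness of Algorithm $\mathfrak{B}$ in identifying innermost elementary loop bodies, and Proposition~\ref{prop:conn} to justify substituting $\Loop(\ell_B(\beta))$ for the $\beta$-labelled transitions, processing loops innermost-first exactly as your induction on the number of Step~1 iterations does. If anything, your explicit treatment of the concurrent-loop interleaving subtlety (the ``folding fact'') is more detailed than the paper's argument, which dismisses these technicalities as ``straightforward, though quite technical.''
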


\begin{proof}
To prove that the HWF-net built using Algorithm $\mathfrak{A}(\mathfrak{D})$ perfectly fits the input log, provided that Algorithm $\mathfrak{D}$ discovers models with perfect fitness, we use three previously proven assertions.  Namely, 
\begin{itemize}
    \item
    Theorem in \cite{begicheva2017discovering} states that when $\mathfrak{D}$ is an discovery algorithm with perfect fitness, Algorithm~$\mathfrak{A}_0(\mathfrak{D})$ discovers a high-level WF-net, whose refinement perfectly fits the input log without repetitions (\ie  the log of an acyclic process).
    \item
    In \cite{tapia2017discovering} it is proved  that, given a log  $L$, Algorithm $\mathfrak{B}$ correctly finds in $L$ all repetitive components that correspond to supports of T-invariants in the Petri net model for $L$.
    \item
    Proposition~\ref{prop:conn} in Subsection~\ref{subsec:HWF} justify correctness of refining a high-level WF-net by substituting sub-process modules for high-level transitions.
\end{itemize}

With all this, proving the Theorem is straightforward, though quite technical. So, we informally describe the logic of the proof here.

Let 
Algorithm~$\mathfrak{D}$ be a discovery algorithm, which discovers a perfectly fitting WF-net for a given event log.

At each iteration of Step~1, an inner elementary repetitive component in the log is discovered  using Algorithm $\mathfrak{B}$. Activities of this component are activities of an inner loop body, which itself does not  have repetitions. Then a WF-net $N$ for this loop body is correctly discovered using Algorithm~$\mathfrak{A}_0(\mathfrak{D})$, the  loop itself is folded into one high-level activity $\beta$, and $N$ is kept as the value $\ell_B(\beta)$. WF-nets for sub-processes within the body of this loop are also discovered by Algorithm~$\mathfrak{A}_0(\mathfrak{D})$ and accumulated in $\ell$.
If  loop activity $\beta$ is itself within  an upper loop body, then with one more iteration of Step 1, the upper loop $N'$ is discovered, the transition labeled with $\beta$ in it is replaced with $N$, and $N'$ is itself folded into a new activity. 

After processing all loops, the Algorithm proceeds to Step 2, where after reducing all loops to high-level activities, Algorithm~$\mathfrak{A}_0(\mathfrak{D})$ is applied to a log without repetitions.

In Step 3 all transitions labeled with loop activities in a high-level and low-level WF-nets are replaced by WF-nets for these loops, kept in $\ell_B$.

So, we can see that while Algorithm~$\mathfrak{A}_0(\mathfrak{D})$ ensures perfect fitness between acyclic fragments of the model (when loops are folded to transitions), Algorithm $\mathfrak{B}$ ensures correct processing of cyclic behavior, and Proposition~\ref{prop:conn} guarantees that replacing loop activities by corresponding loop WF-nets does not violate fitness, the main algorithm provides systematic log processing and model construction.
\end{proof}

\section{Experimental Evaluation}\label{evaluation}

In this section, we report the main outcomes from a series of experiments conducted to evaluate the algorithm for discovering two-level hierarchical process models from event logs.

To support the automated experimental evaluation, we implemented the hierarchical process discovery algorithm described in the previous section using the open-source library PM4Py \cite{berti2019process}.
The source files of our implementation are published in the open GitHub repository \cite{ExRepo}.
We conducted experiments using two kinds of event logs:
\begin{enumerate}
    \item \emph{Artificial} event logs generated by manually prepared process models;
    \item \emph{Real-life} event logs provided by various information systems.
\end{enumerate}

Event logs are encoded in a standard way as XML-based XES files.

\emph{Conformance checking} is an important part of process mining along with process discovery \cite{Carmona18}.
The main aim of conformance checking is to evaluate the quality of process discovery algorithm by estimating the corresponding quality of discovered process models.
Conformance checking provides four main quality dimensions.
\emph{Fitness} estimates the extent to which a discovered process model can execute traces in an event log. 
A model perfectly fits an event log if it can execute all traces in an event log.
According to Theorem 1, the hierarchical process discovery algorithm yields perfectly fitting process models.
\emph{Precision} evaluates the ratio between the behavior allowed by a process model and not recorded in an event log. 
A model with perfect precision can only execute traces in an event log. The perfect precision limits the use of a process model since an event log represents only a finite ``snapshot'' of all possible process executions. 
Generalization and precision are two dual metrics. 
The fourth metric, \emph{simplicity}, captures the structural complexity of a discovered model.
We improve simplicity by the two-level structure of a discovered process models.

Within the experimental evaluation, we estimated fitness and precision of process models discovered from artificially generated and real-life event logs.
Fitness was estimated using alignments between a process model and an event log as defined in \cite{adriansyah2011conformance}.
Precision was estimated using the complex ETC-align measures proposed in \cite{munoz2010fresh}.
Both measures are values in the interval $[0, 1]$.

\subsection{Discovering HWF-Nets from Artificial Event Logs}
The high-level source for generating artificial low-level event logs was the Petri net earlier shown in Figure \ref{fig:petrinet}.
In this model, we refined its transitions with different sub-processes containing sequential, parallel and cyclic executions of low-level events.
The example of refining the Petri net from Figure \ref{fig:petrinet} is shown in Figure \ref{fig:llclassic}, where we show the corresponding flat representation of an HWF-net.

\begin{figure}[!h]
        \centering
    \includegraphics[width=\textwidth]{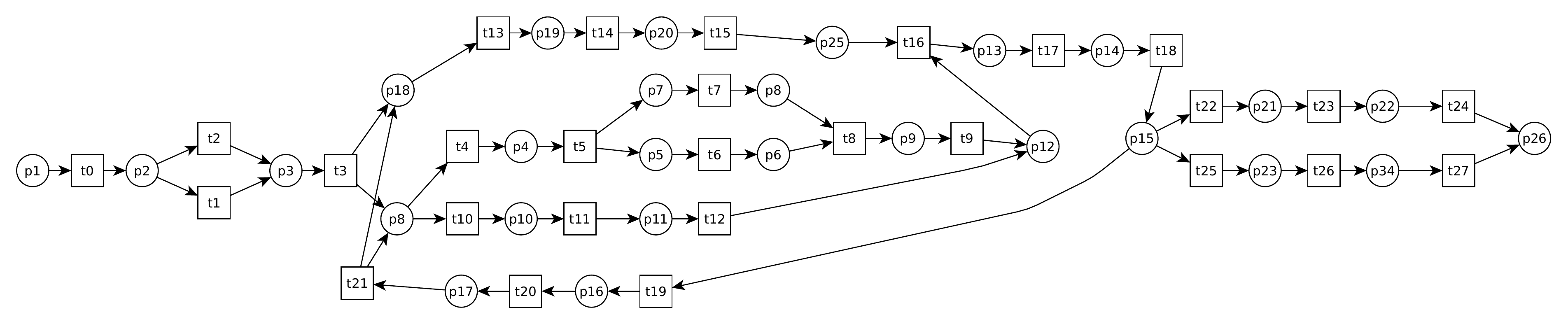}
    \caption{A flat WF-net with generated by refining  the WF-net in~Fig.~\ref{fig:petrinet}}
    \label{fig:llclassic}
\end{figure}

Generation of low-level event logs from the prepared model was implemented using the algorithm presented in \cite{Gena2014}.
Afterwards, we transform a low-level event log into a high-level event log by grouping low-level events into a single high-level event and by extracting the information about cyclic behavior.

The corresponding high-level WF-net discovered from the artificial low-level event log generated from the WF-net shown in \ref{fig:llclassic} is provided in Figure \ref{fig:final_net}.
Intuitively, one can see that this high-level WF-net is rather similar to the original Petri net from Figure \ref{fig:petrinet}.

\begin{figure}[!h]
        \centering
    \includegraphics[width=\textwidth]{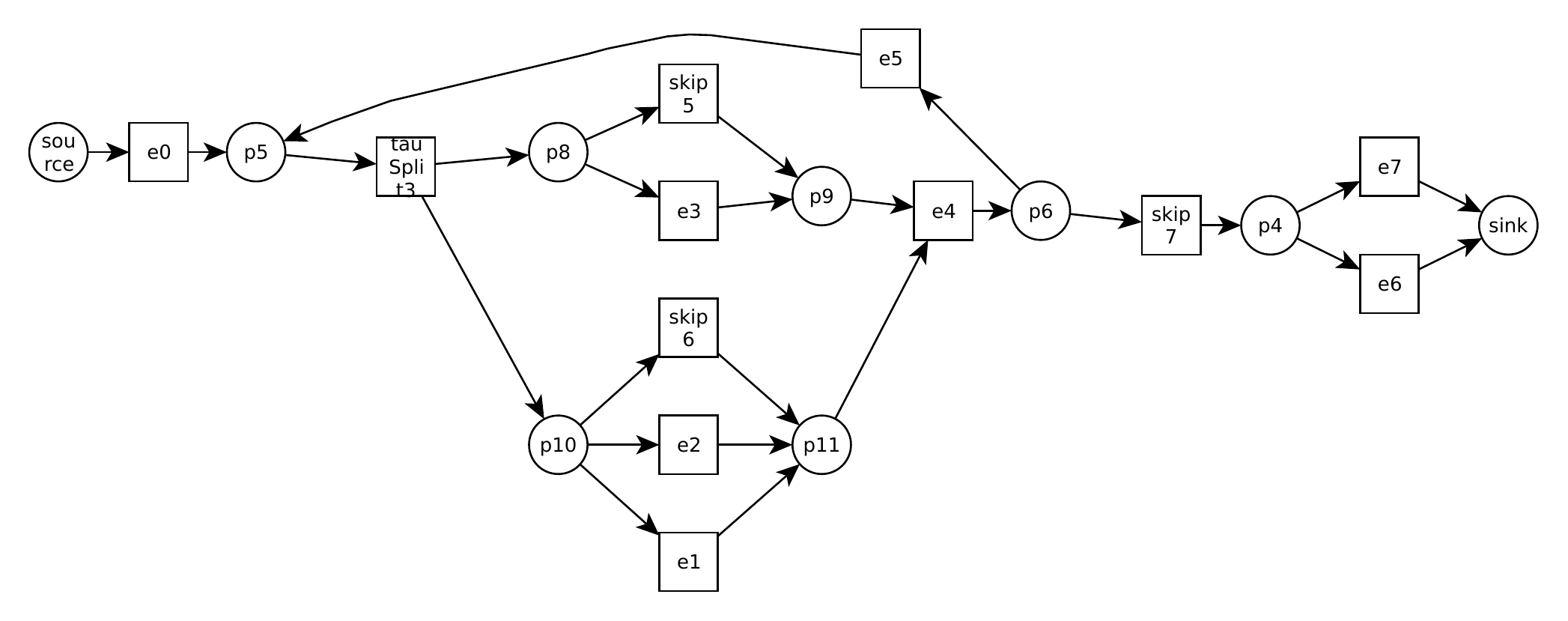}
    \caption{A high-level WF-net discovered from an event log generated by refining the WF-net in Fig.~\ref{fig:llclassic}}
    \label{fig:final_net}
\end{figure}

As for the quality evaluation for the above presented high-level model, we have the following:
\begin{enumerate}
    \item The discovered high-level WF-net perfectly fits the high-level event log obtained from a low-level log, where we identified cycles and grouped activities correspondingly.
    \item The flat WF-net obtained by refining transitions in a discovered high-level WF-net almost perfectly fits ($0.9534$) a low-level log. The main reason for this is the combination of the coverability lack and the straightforward accuracy indicators of the algorithm.
    \item Precision of the flat WF-net is $0.3729$, which is the result of identifying independent sub-processes generalizing the behavior recorded in an initial low-level event log.
\end{enumerate}

Other examples of process models that were used for the artificial event log generation are also provided in the main repository \cite{ExRepo}. 

\subsection{Discovering HWF-Nets from Real-Life Event Logs}

We used two real-life event logs provided by \emph{BPI} (Business Process Intelligence) \emph{Challenge} 2015 and 2017 \cite{bpiData}.
These event logs were also enriched with the additional statistical information about flat process models.

The \emph{BPI Challenge 2015} event log was provided by five Dutch municipalities.
The cases in this event log contain information on the main application and objection procedures in various stages.
A flat low-level WF-net for case $f1$ discovered using the Inductive miner is shown in Figure \ref{fig:initial_net_BPIC151}.
It is easy to see that this model is absolutely inappropriate for the visual analysis.

\begin{figure}[!h]
        \centering
    \includegraphics[width=\textwidth]{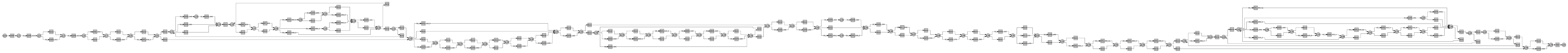}
    \caption{A flat WF-net discovered from BPI Challenge 2015 event log}
    \label{fig:initial_net_BPIC151}
\end{figure}

The code of each event in the \emph{BPI Challenge 2015} event log consists of three parts: two digits, a variable number of characters, and three digits more. 
From the event log description we know that the first two digits and the characters indicate the sub-process the event belongs to, which allows us to assume an option of identifying the sub-processes. 
We used the first two parts of the event name to create the mapping between low-level events and sub-proces names. 
After applying our hierarchical process discovery algorithm in combination with the Inductive Miner, we obtained a high-level model presented in Figure \ref{fig:hl_net_BPIC151} that is far more comprehensible than the flat model mainly because of its size.

\begin{figure}[!h]
        \centering
    \includegraphics[width=0.8\textwidth]{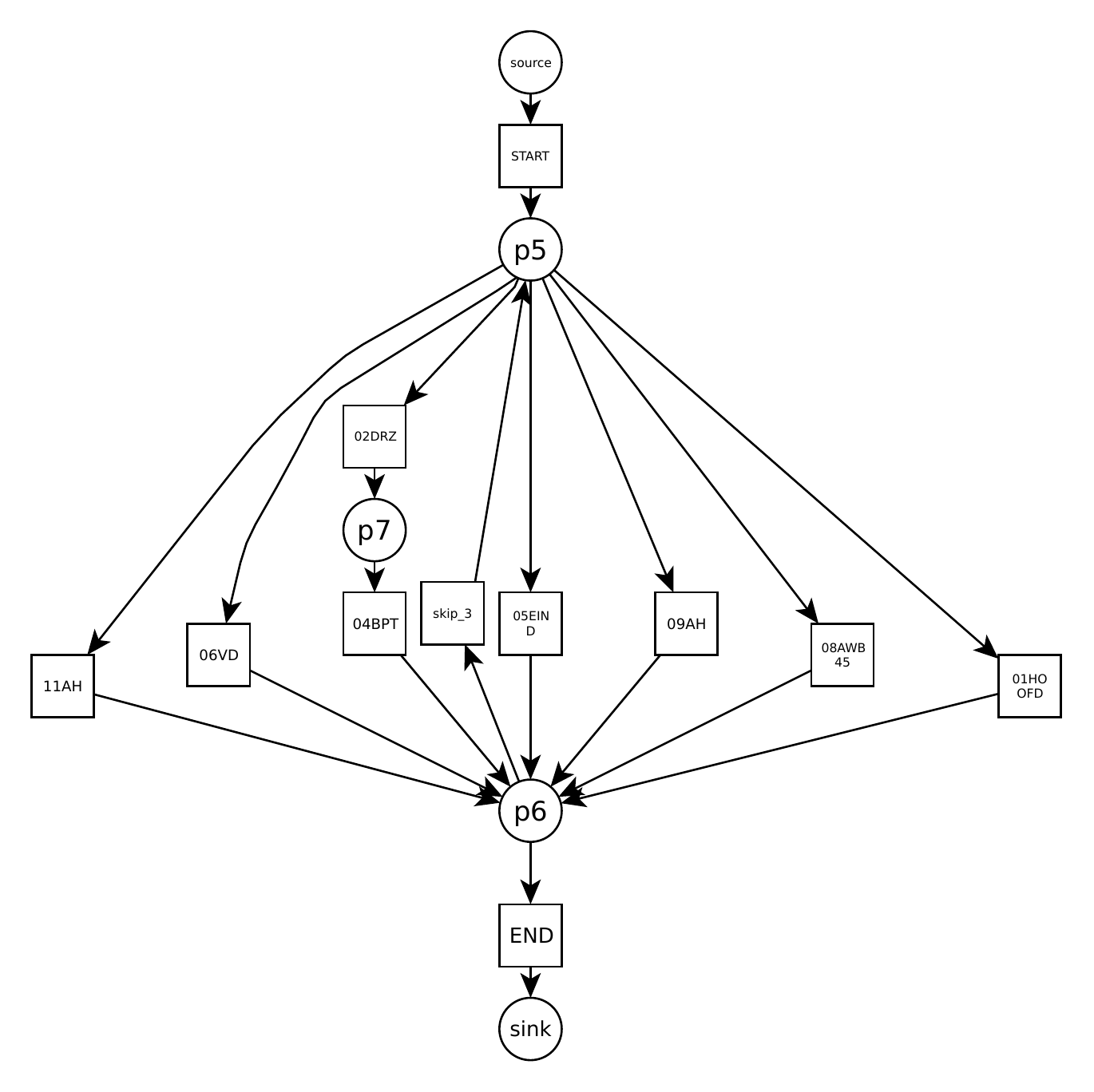}
    \caption{A high-level WF-net discovered from the 'BPI Challenge 2015' event log}
    \label{fig:hl_net_BPIC151}
\end{figure}

The \emph{BPI Challenge 2017} event log pertains to a loan application process of a Dutch financial institute. 
The data contains all applications filed trough an online system from 2016 till February of 2017. 
Here, as a base for mapping low-level events to sub-proces names, we used the mark of the event type in its name --- application, offer or workflow.
Thus, a mapping could be based on various features of event data dependint on the expert's needs. 
The flat model for this data is presented in Figure \ref{fig:initial_net_BPIC17}, which is also difficult to read because of  its purely sequential representation.
\medskip

\begin{figure}
        \centering
    \includegraphics[width=\textwidth]{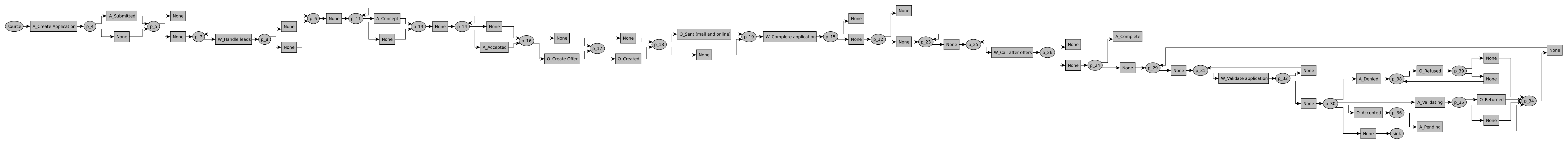}
    \caption{A flat WF-net discovered from the 'BPI Challenge 2017' event log}
    \label{fig:initial_net_BPIC17}
\end{figure}

Applying the  principle of mapping low-level events in the \emph{BPI Challenge 2017} event log  described above, we obtained the high-level WF-net shown in Fig.~\ref{fig:hl_net_BPIC17}, which clearly demonstrates sub-processes (if necessary, they can be expanded) and their order.
\bigskip

\begin{figure}
        \centering
    \includegraphics[width=0.34\textwidth]{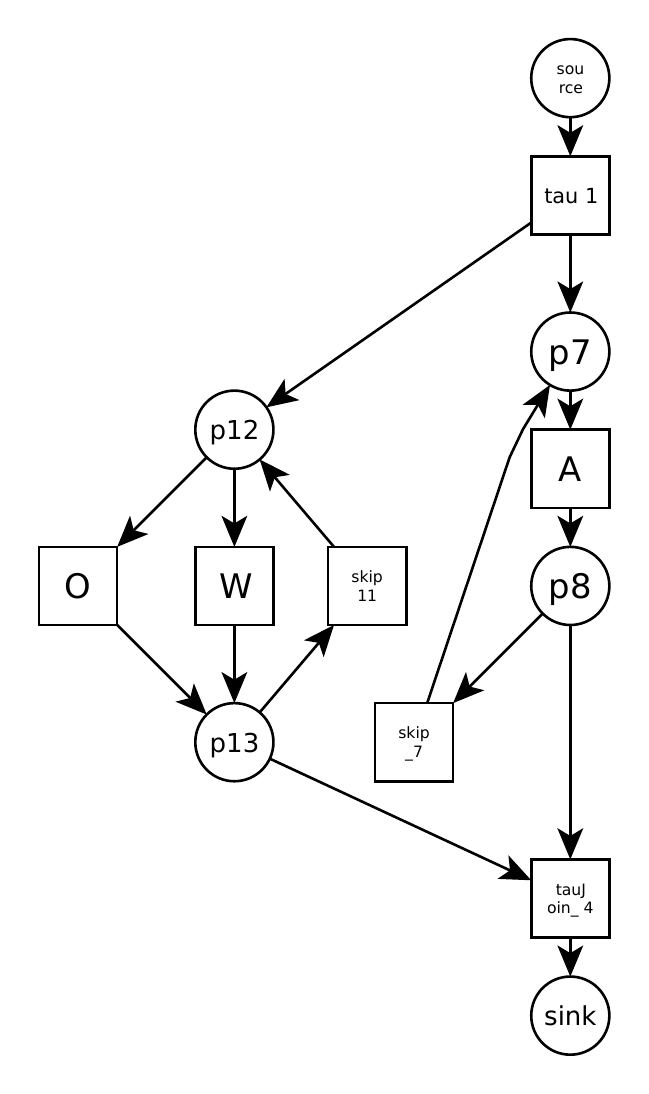}
    \caption{A high-level WF-net discovered from the 'BPI Challenge 2017' event log}
    \label{fig:hl_net_BPIC17}
\end{figure}

Table \ref{tab:rlq} shows fitness and precision evaluation for flat and high-level WF-nets discovered from real-life \emph{BPI Challenge 2015} and \emph{2017} event logs.

\medskip

\begin{table}[!h]
	\centering
	\caption{Comparing metrics for  flat and high-level WF-nets discovered from 'BPI Challenge' event logs}\label{tab:rlq}
	\begin{tabular}{|l|l|l|l|l|l|}
		\hline
        \multirow{2}{*}{Event log} & \multicolumn{3}{|c|}{High-level WF-net} & \multicolumn{2}{|c|}{Flat WF-net} \\ \cline{2-6}
        & Fitness 1 & Fitness 2& Precision & Fitness & Precision \\ \hline
        BPI Challenge 2015 & 0.9122 & 1 & 0.5835 & 0.9700 & 0.5700 \\ \hline
        BPI Challenge 2017 & 0.9393 & 1 & 0.3898 & 0.9800 & 0.7000 \\ \hline
	\end{tabular}
\end{table}

\emph{Fitness 1} shows the fitness evaluation between the flat WF-net constructed from the high-level WF-net by refining transitions with low-level subprocesses.
\emph{Fitness 2} shows the fitness evaluation between the high-level WF-net and an event log with low-level events grouped into sub-processes.
This confirms the formal correctness results of the hierarchical process discovery algorithm.
Similar to the experimental results for artificial event logs, here we also observe a decrease in the precision for the identification of sub-processes, therefore, generalizing traces in an initial low-level event log.

\section{Conclusion and Future Work}\label{sec7}

In this research, we provide a new process discovery technique for solving the problem of discovering a hierarchical WF-net model from a low-level event log, based on ``folding'' sub-processes into high-level transitions according to event clustering. Unlike the previous solutions, we allow cycles and concurrency in process behavior. 

We prove that the proposed technique makes it possible to obtain hierarchical models, which fit event logs perfectly. The technique was also evaluated on artificial and real event logs. Experiments show that fitness and precision of obtained hierarchical models are almost the same as for the standard ``flat'' case, while hierarchical models are much more compact, more readable and more visual.

To implement our algorithm and check it on real data we used \emph{Python} and one of the most convenient instruments for process mining at the moment --- the \emph{PM4Py} \cite{berti2019process}. The implementation  is provided in the public \emph{GitHub} repository \cite{ExRepo}.

In further research, we plan to develop and evaluate various event clustering methods for automatic discovery of hierarchical models.

 \bibliographystyle{elsarticle-num} 
 \bibliography{references}





\end{document}